\documentclass[10pt,twocolumn]{article}
\usepackage[letterpaper,margin=0.9in,columnsep=0.28in]{geometry}
\usepackage[T1]{fontenc}
\usepackage[utf8]{inputenc}
\usepackage{amsmath,amssymb,amsthm}
\usepackage{booktabs}
\usepackage{multirow}
\usepackage{graphicx}
\usepackage[numbers,sort&compress]{natbib}
\usepackage[hidelinks]{hyperref}
\usepackage{microtype}

\newcommand{\tabmathcaption}{Extinct supports hold 6.1\% of tokens (9.0\% of rows). At the survivor-conditional interior budget $B=1{,}989$ tokens ($M_{\mathrm{safe}}=0.08$, tolerance $0.5\%$), the survivor arms assemble with $G=+9$ while $D_{00}^{\mathrm{all}}$ has $G=-3$.}

\newtheorem{proposition}{Proposition}
\newtheorem{lemma}{Lemma}
\newcommand{\Fset}{\mathcal{F}}

\title{\vspace{-1.2em}Counterfactual Gate--Support Decomposition:\\
Separating Selection from Redistribution in Training-Data Filtering}
\author{Hyojung Han$^{\dagger}$ \qquad Jongmin Kim \qquad Seung-Hun Jeon\\[4pt]
        \small ThakiCloud\\[2pt]
        \small $^{\dagger}$Corresponding author: \texttt{hyojung.han@thakicloud.com}}
\date{2026-10-06}

\begin{document}
\maketitle

\begin{abstract}
A training-data quality gate changes both which examples survive within regions of the input
space and how training mass is distributed across those regions. A gate-versus-random
comparison therefore does not isolate these mechanisms. We introduce Counterfactual
Gate--Support Decomposition (CGSD), a four-arm, budget-matched design that crosses gate
eligibility with support distribution and decomposes the observed gate effect into
within-support selection, between-support redistribution, and their interaction. The practical
constraint is finite-pool realizability: we characterize mass and token-assembly conditions
under which the required arms can be constructed, together with a support-extinction case in
which the ordinary design is undefined. On Korean--English and Korean--Japanese retrieval,
the mean within-support selection component is the only one that exceeds its calibrated null
band ($+29.7$ and $+5.6$ holdout items; $30/30$ positive replicates), decisively in
Korean--English and marginally in Korean--Japanese; redistribution and interaction remain
unresolved at the available power. The same pattern holds for a second, unrelated gate family --
a model-confidence threshold rather than a deterministic verifier -- judged against its own null
($+27.3$ items, $9/9$ positive), so the result is not a property of one gate's mechanism. On a mathematics gate, zero-eligibility supports leave the
ordinary design undefined at every budget, while the extended design is constructible only
inside narrow budget bands that an exhaustive scan locates and that the natural operating point
misses --- the assembly condition separating a property of the pool from a limitation of the
packer. CGSD therefore returns a decomposition, a constructible-but-underpowered design, or an
explicit non-constructibility result, and says which; it prevents support redistribution from
being silently interpreted as sample-level gate quality.
\end{abstract}

\section{Introduction}

Data-quality gates are deployed on the premise that removing bad examples improves the model
trained on what remains. The evidence usually offered is a comparison between a gated training
set and a random subset of the same size. That comparison answers a different question than the
one being asked, because a gate does not only choose better examples --- it also reweights the
distribution over whatever regions the data occupies. A gate that removes 40\% of a corpus may
be removing 5\% of one topic and 95\% of another. The downstream difference then contains a
selection effect and a redistribution effect, and the comparison cannot say which one it saw.

This matters in practice because the two point at different levers. If the resolved gain is
within-support selection, improving the within-support selection rule is the relevant
intervention. If redistribution dominates, the gate is functioning as an accidental curriculum
and explicit mixture control is the more direct and more controllable alternative. If the two
have opposite signs, tightening the gate can make things worse in a way that looks inexplicable
from the aggregate number alone.

We propose to answer the question by construction. Fix a source pool, a support representation
defined on \emph{pre-gate} text, and a training token budget. Build four training sets that
cross two binary factors --- eligibility (unrestricted vs.\ gate-eligible) and support
distribution (source-preserved vs.\ gate-induced) --- at the same budget. The observed gate
effect then decomposes exactly, replicate by replicate, into three terms. No modelling
assumption is introduced; the arms differ only in the factor being varied.

The algebra here is the familiar $2\times2$ factorial identity, and we do not claim it as new.
Our contribution is making that contrast operational for finite training pools under fixed
exposure, and characterizing when the arms the identity requires cannot be realized at all.

The price of that exactness is that the arms have to exist. Our main methodological finding is
that existence is not a technicality. It fails in four ways whose prescriptions point in
different directions: a support cannot supply its share, which a smaller budget fixes; a support
has no eligible mass at all, which no budget fixes; the available example sizes cannot tile the
per-support quotas at this budget, which is a property of the pool and not monotone in the
budget; and an assembly exists but the packer does not find it, which is a property of the
algorithm alone. Half of this paper is about telling them apart, because reading one as another
sends you to change the wrong thing --- as it did us, on our own mathematics pool.

\paragraph{Contributions.}
(i) A construction-based operationalization of the $2\times2$ gate contrast for finite training
pools at fixed token exposure, separating within-support selection from between-support
redistribution. (ii) Two independent feasibility conditions --- mass and assembly --- and their
joint feasible region as a function of budget and support partition, with a bound linking the
budget tolerance to the realized support distribution. (iii) An extension for \emph{support
extinction}, where the ordinary design is undefined at every positive budget, isolating the
extensive margin as a signed contrast. (iv) A validation protocol --- signature-locked null
calibration, analytic known-truth recovery with interval coverage, infeasible controls, and
extinction controls --- under which we then demonstrate the method on two retrieval locales and
two unrelated gate families (a deterministic verifier and a model-confidence threshold), and
a feasibility audit of mathematics and text-to-SQL pools that returns extinction, empty feasible
regions, and one narrow feasible band, each distinguished from the others rather than reported
as a single failure.

\section{Related work}

\paragraph{Data filtering and selection.}
A large body of work asks which examples to keep. Heuristic and model-based quality filters,
deduplication~\citep{lee2022deduplicating,abbas2023semdedup}, difficulty- or margin-based
pruning~\citep{sorscher2022beyond}, influence-guided instance selection for a target
task~\citep{xia2024less}, and datamodel-driven selection~\citep{engstrom2024dsdm} all improve
downstream performance by discarding part of a pool. Controlled curation benchmarks make the
comparison itself the object of study~\citep{gadre2023datacomp}. These methods are prescriptive:
they propose a better rule. CGSD is diagnostic and takes the rule as given.

\paragraph{Data mixture and domain reweighting.}
A parallel line shows that the proportions themselves change downstream performance, independent
of which individual examples are chosen: reweighting domains with a proxy
model~\citep{xie2023doremi}, generalization-estimated reweighting~\citep{fan2024doge},
regression- and law-based mixture search~\citep{liu2024regmix,ye2024datamixinglaws}, importance
resampling toward a target distribution~\citep{xie2023dsir}, and unified treatments of the mixing
objective~\citep{chen2024aioli}. This literature establishes that mixture is a real and separate
lever, which is precisely why a filter that silently moves the mixture is hard to interpret.
CGSD is the measurement that separates the two levers for a filter that was not designed to move
either one deliberately.

\paragraph{Attributing performance to training data.}
Influence functions~\citep{koh2017influence,grosse2023studying}, gradient-trace
attribution~\citep{pruthi2020tracin}, datamodels~\citep{ilyas2022datamodels}, and Shapley-style
valuation~\citep{ghorbani2019datashapley,wang2025inrunshapley} attribute performance to
individual training examples, usually without retraining. CGSD works at a coarser grain and in
the opposite direction: it retrains on deliberately constructed pools to attribute an effect to
two \emph{mechanisms} rather than to examples, which is why it can be run on a gate whose scoring
function is opaque.

\paragraph{Decomposition, overlap and interference.}
Splitting an aggregate gap into a component due to differing composition and a component due to
differing returns is the Oaxaca--Blinder decomposition~\citep{oaxaca1973male,blinder1973wage},
and splitting a total effect into direct and indirect parts is the mediation
literature~\citep{pearl2001direct}. Our mass condition is the finite-sample analogue of the
positivity requirement that underlies matching and propensity
methods~\citep{rosenbaum1983propensity,stuart2010matching}: a contrast that no unit can supply is
not estimable at any sample size. Because one model is trained per arm on a pooled corpus, we
also inherit the interference concerns formalized for causal inference under
spillover~\citep{hudgens2008toward}; we return to this in the limitations.

\paragraph{Experimental rigour.}
Seed variance, under-reported variability and fragile comparisons are well documented in
ML~\citep{henderson2018deep,picard2021torch,dodge2019show}, as are the reporting practices and
interval procedures that mitigate
them~\citep{pineau2021improving,bouthillier2021accounting,agarwal2021deep}. Our validation
protocol (Section~\ref{sec:validation}) follows that tradition and adds one element specific to
this setting: a null band that is bound to the exact configuration it was measured on, so that a
threshold cannot silently be reused across operating points.

\section{The estimand}
\label{sec:estimand}

Let $D=\{x_i\}$ be a source pool. Each example carries a token count $t_i$, a gate verdict
$g_i\in\{0,1\}$, and a support label $z_i=\Pi(x_i)$ produced by a partition $\Pi$ of the
pre-gate input. We write $T=\sum_i t_i$ and, for support $c$,
$T_c=\sum_{i:z_i=c} t_i$, $E_c=\sum_{i:z_i=c} g_i t_i$, and $e_c=E_c/T_c$ for the
\emph{eligibility rate}.

\paragraph{Support must be pre-gate.}
$\Pi$ is computed from the input text only. If the support label were derived from anything the
gate produces, the redistribution term would be defined in terms of the treatment and the
decomposition would be circular. This is not hypothetical: in our own pipeline the most
convenient available label was a gate output, and using it would have made the gate's failure
reason its own support.

\paragraph{Four arms.}
Given a token budget $B$, define the source share $p_c=T_c/T$ and the gate-induced share
$p^G_c=E_c/\sum_{c'}E_{c'}$. Each arm fills per-support budgets by sampling to a token target:

\begin{center}
\begin{tabular}{lll}
\toprule
arm & eligibility & support share \\
\midrule
$D_{00}$ & unrestricted & $p_c$ \\
$D_{10}$ & gate-eligible & $p_c$ \\
$D_{01}$ & unrestricted & $p^G_c$ \\
$D_{11}$ & gate-eligible & $p^G_c$ \\
\midrule
$R$ & unrestricted & none (budget only) \\
\bottomrule
\end{tabular}
\end{center}

The decomposition uses the four support-controlled arms. $R$ is a fifth, literature-facing
reference: it spends the same token budget with no support control at all, which is what a
plain ``gate versus random subset'' comparison implicitly uses as its baseline. We report
$Y(D_{00})-Y(R)$ so that the effect of imposing support control is visible separately from the
gate contrast itself.

Writing $Y(\cdot)$ for the downstream metric of a model trained on an arm,
\begin{align}
E &= Y(D_{10}) - Y(D_{00}), \label{eq:E}\\
S &= Y(D_{01}) - Y(D_{00}), \label{eq:S}\\
I &= Y(D_{11}) - Y(D_{10}) - Y(D_{01}) + Y(D_{00}), \label{eq:I}
\end{align}
and the identity $\Delta_G = Y(D_{11})-Y(D_{00}) = E+S+I$ holds exactly for every replicate,
by construction rather than in expectation.

\paragraph{What $E$ is not.}
$E$ is the effect of restricting to gate-eligible examples \emph{while holding the support
distribution fixed}. It is not ``quality''. It does not say why eligibility helps --- label
correctness, length, formatting, and annotator agreement all live inside it. We keep the
neutral name throughout because the temptation to read $E$ as a quality measurement is the
single easiest way to overclaim from this decomposition.

\paragraph{The estimand is budget- and partition-conditional.}
$E$, $S$ and $I$ are defined at a particular $B$ and $\Pi$. Two operating points give two
estimands, so absolute magnitudes are not comparable across pools with different budgets or
holdout sizes. We write $E(B,\Pi)$ when the dependence matters.

\section{Feasibility: two independent conditions}

\subsection{Mass}
$D_{10}$ keeps the source share inside the eligible pool, so it needs
$B\,p_c \le E_c$ for every $c$, i.e.\ $B/T \le \min_c e_c$. Define
\begin{equation}
M(B,\Pi) \;=\; \min_c e_c \;-\; B/T .
\label{eq:M}
\end{equation}
$M<0$ is a positivity violation at this budget and is repaired by lowering $B$.

\subsection{Assembly}
Examples are indivisible, and the four arms do not draw from the same pool: $D_{00}$ and
$D_{01}$ draw from the unrestricted pool while $D_{10}$ and $D_{11}$ draw only from the
gate-eligible one, at two different target shares. Assembly must therefore be checked per arm.
Index eligibility by $a\in\{0,1\}$ and support share by $s\in\{0,1\}$, and let
$\mathcal{A}_{a,c}$ be the set of token sums attainable in support $c$ from the examples
eligibility level $a$ admits. With $p^{(0)}_c=p_c$, $p^{(1)}_c=p^G_c$ and
$b^{(s)}_c=B\,p^{(s)}_c$,
\begin{align}
Q_{as}(B) &= \sum_c \Bigl( b^{(s)}_c - \max\{x\in\mathcal{A}_{a,c} : x\le b^{(s)}_c\} \Bigr),
\label{eq:Q}\\
G_{as}(B,\Pi;\tau) &= \tau B - Q_{as}(B), \qquad
G_{\min} = \min_{a,s} G_{as},
\label{eq:G}
\end{align}
where $\tau$ is the budget matching tolerance. The constructor never overshoots a per-support
target --- overshooting would break the support match --- so each $Q_{as}$ is a sum of
non-negative shortfalls. $G_{as}\ge 0$ says an in-tolerance assembly of that arm
\emph{exists}. It is a property of the pool under the stated matching contract, not of the
packing heuristic: a greedy packer may fail where $G_{as}\ge0$, and that is a separate,
algorithmic problem. Reporting one as the other leads directly to requesting data that is not
needed. We compute $\mathcal{A}_{a,c}$ exactly by a bitset subset-sum and fall back to the
packer only when that is prohibitive, flagging the fallback; every feasibility verdict reported
in this paper is exact, and no negative verdict rests on a heuristic packer failure.

Crucially $G_{as}$ is not monotone in $B$: it is a lattice quantity, and moving the budget
slightly can flip its sign. Lowering $B$ does not reliably repair assembly, and in general makes
it worse, since $\tau B$ shrinks while $Q_{as}$ does not.

A tolerance on the \emph{total} budget would be a weak guarantee if it allowed the realized
support distribution to drift, so we record what it actually buys.

\begin{lemma}[Support drift under no-overshoot]
\label{lem:tv}
Let $\tilde p$ be the realized support distribution of an arm whose per-support shortfalls are
non-negative and whose total shortfall is at most $\tau B$. Then
$\operatorname{TV}(\tilde p, p^{(s)}) \le \tau/(1-\tau)$.
\end{lemma}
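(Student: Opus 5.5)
Write $b_c=B\,p^{(s)}_c$ for the per-support targets, so $\sum_c b_c=B$. Write $\delta_c\ge 0$ for the per-support shortfalls, so the realized token count in support $c$ is $r_c=b_c-\delta_c$. The hypothesis is $\Delta=\sum_c\delta_c\le\tau B$. The realized total is $R=B-\Delta\ge(1-\tau)B>0$ for $\tau<1$, which we assume, since otherwise the bound is vacuous. The realized distribution is $\tilde p_c=r_c/R$. The plan is to write the pointwise difference in closed form, split it by sign, and sum.

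The key identity is
\[
\tilde p_c-p^{(s)}_c=\frac{b_c-\delta_c}{B-\Delta}-\frac{b_c}{B}=\frac{b_c\Delta-B\delta_c}{B(B-\Delta)} .
\]
We use $\operatorname{TV}=\sum_c(\tilde p_c-p^{(s)}_c)^+$. This is valid because both vectors sum to one, so the positive and negative parts have equal mass. The positive part of each term is at most $b_c\Delta/(B(B-\Delta))$, because $\delta_c\ge0$. This is exactly where the no-overshoot hypothesis enters: without it, $\delta_c$ could be negative and some supports could gain mass without bound.

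Summing over $c$ and using $\sum_c b_c=B$ gives
\[
\operatorname{TV}(\tilde p,p^{(s)})\le\frac{\Delta}{B-\Delta}.
\]
This expression is increasing in $\Delta$. Substituting $\Delta\le\tau B$ therefore yields $\tau/(1-\tau)$.

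We could also bound the negative parts as a cross-check. That route gives $\sum_c\delta_c/(B-\Delta)$ minus a nonnegative term, which is the same order, so both routes agree.

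The only delicate step is justifying $\operatorname{TV}=\sum(\cdot)^+$ and keeping the denominator $R$ rather than $B$, because normalization is what produces the $1/(1-\tau)$ factor. Otherwise the argument is routine algebra. We would also remark that the bound is essentially tight. It is approached when all the shortfall falls on a single tiny support, although the mass of that support then limits how large the TV can actually get. Tightness is not needed for the lemma.
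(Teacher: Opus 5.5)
Your proof is correct and is essentially the paper's argument. Both write the pointwise difference normalized by the realized total $B-\Delta$ and sum it to $\Delta/(B-\Delta)$. Your identity $\operatorname{TV}=\sum_c(\tilde p_c-p^{(s)}_c)^+$ stands in for the paper's triangle inequality on $\tfrac12\sum_c|\tilde p_c-p^{(s)}_c|$. Your positive-part bound $b_c\Delta/(B(B-\Delta))$ is exactly the paper's term $b^{(s)}_c(1/\tilde B-1/B)$.

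One correction to your optional tightness aside. Realized counts satisfy $r_c=b_c-\delta_c\ge0$ and $B-\Delta\le B$, so for any set $S$ of supports $\sum_{c\in S}(p^{(s)}_c-\tilde p_c)\le\sum_{c\in S}\delta_c/B$. Hence $\operatorname{TV}\le\Delta/B\le\tau$. The sharp value is therefore $\tau$, attained by dropping one support of target mass $\tau B$ entirely. The bound $\tau/(1-\tau)$ is never reached, though it agrees with $\tau$ to first order.
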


\begin{proof}
Write the realized support totals as $\tilde b_c = b^{(s)}_c - q_c$ with $q_c\ge0$ and
$\sum_c q_c = q \le \tau B$, and let $\tilde B = B - q$. Then
$\tilde p_c - p^{(s)}_c = (b^{(s)}_c - q_c)/\tilde B - b^{(s)}_c/B$, so
$\operatorname{TV} = \tfrac12\sum_c |\,\tilde p_c - p^{(s)}_c| \le \tfrac12\bigl(
\sum_c q_c/\tilde B + \sum_c b^{(s)}_c\,(1/\tilde B - 1/B)\bigr) = q/\tilde B \le
\tau B/(B-\tau B) = \tau/(1-\tau)$.
\end{proof}

At $\tau=0.5\%$ this bounds realized support drift at $0.503\%$ in total variation, so the
budget tolerance and the support match are not two independent promises --- the first implies
the second.

\subsection{The feasible region}
\begin{equation}
\Fset_\Pi = \{\,B>0 : M(B,\Pi)\ge 0 \ \wedge\ G_{\min}(B,\Pi;\tau)\ge 0\,\}.
\label{eq:F}
\end{equation}
Both conditions depend on the partition, so feasibility is joint in exposure and support
representation. Figure~\ref{fig:region} shows $\Fset_\Pi$ shrinking as the partition is refined
on our retrieval pool. We deliberately write these as functions of the partition rather than of $K$: two partitions
with the same cardinality can have very different granularity.

\begin{figure}[tb]
\centering
\includegraphics[width=\columnwidth]{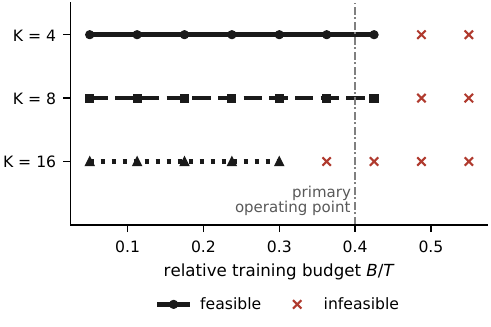}
\caption{The feasible region $\Fset_\Pi$ of \eqref{eq:F} over relative training budget, for
three $k$-means partitions of the Korean--English pool at $\tau=0.5\%$; ``feasible'' marks a
budget in $\Fset_\Pi$. Refining the partition
from $K{=}8$ to $K{=}16$ moves the primary operating point $B/T{=}0.40$ outside the feasible
region. For this pool the binding condition is $M$ throughout: the pool is large enough that
$G$ stays comfortably positive ($+51$ to $+504$ tokens of slack). $G$ binds instead on the small
gated pools of Section~\ref{sec:math}.}
\label{fig:region}
\end{figure}

\paragraph{The realization contract.}
It is worth separating two things that are easy to conflate. The target budget and the target
support distribution define the \emph{estimand}. The tolerance $\tau$, the no-overshoot rule
and the no-cross-support-compensation rule define the \emph{finite-pool realization contract}
under which we regard a constructed arm as an admissible realization of that estimand. All
three could be relaxed to enlarge $\Fset_\Pi$, and each relaxation would make an infeasible
domain look feasible.

We fixed $\tau=0.5\%$ before any domain-level feasibility was inspected; under the no-overshoot
rule Lemma~\ref{lem:tv} shows this also bounds realized support drift at $0.503\%$, which is why
one number can govern both matched quantities. Relaxing the contract after observing
infeasibility does not repair the original audit --- it defines a different approximation to the
target counterfactual, and reports that approximation under the original name. A sensitivity
analysis at $\tau=1\%$ is legitimate and we would report it as such; using it to rescue a domain
that is infeasible at the pre-specified contract is not.

\section{Support extinction}
\label{sec:extinction}

\begin{proposition}[Structural extinction boundary]
\label{prop:ext}
If some source support has positive mass and zero gate eligibility, the support-preserving
gate-eligible counterfactual is infeasible for every positive training budget.
\end{proposition}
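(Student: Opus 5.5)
The plan is to read the proposition directly off the mass condition \eqref{eq:M}. The arm in question is $D_{10}$: it draws only from gate-eligible examples and targets the source share $p_c$. Suppose support $c^\ast$ has $T_{c^\ast}>0$ and $E_{c^\ast}=0$, so that $e_{c^\ast}=0$. Then $p_{c^\ast}=T_{c^\ast}/T>0$, and for any $B>0$ the per-support target $b^{(0)}_{c^\ast}=B\,p_{c^\ast}$ is strictly positive. The eligible examples in $c^\ast$ have total token mass $E_{c^\ast}=0$, so the target exceeds the available mass. Equivalently, $M(B,\Pi)\le e_{c^\ast}-B/T=-B/T<0$ for every $B>0$. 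Hence $B\notin\Fset_\Pi$ for every positive budget, and no choice of $B$ repairs the violation, unlike the ordinary $M<0$ case.

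Because the paper admits a per-arm tolerance $\tau$, I would add a second step showing that infeasibility survives the realization contract and does not rest on $M$ alone. The only attainable token sum in $\mathcal{A}_{1,c^\ast}$ is $0$, since every eligible example in $c^\ast$ would contribute zero tokens. The shortfall in that support is therefore the full target $b^{(0)}_{c^\ast}=Bp_{c^\ast}$. The no-cross-support-compensation rule forbids other supports from covering it.

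The realized distribution then puts zero mass on $c^\ast$, while the target puts mass $p_{c^\ast}$ there. So the support match fails by at least $p_{c^\ast}$ in total variation, independently of $B$.

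The main obstacle is this tolerance subtlety. If $p_{c^\ast}\le\tau$, then $Q_{10}\ge Bp_{c^\ast}$ could still be $\le\tau B$, and the assembly gap alone would not exclude the arm. I would resolve this by treating the mass condition as part of the definition of admissibility: $D_{10}$ is feasible only if $M\ge 0$, per \eqref{eq:F}. That makes the proposition exact for every $B>0$. I would also note that a realized arm missing $c^\ast$ entirely does not preserve the support set, so it is not a realization of the support-preserving counterfactual at all.
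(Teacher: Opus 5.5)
Your first paragraph is essentially the paper's own proof: $D_{10}$ requires $Bp_c\le E_c$ in every support, and with $E_{c^\ast}=0<Bp_{c^\ast}$ this fails for every $B>0$, equivalently $M(B,\Pi)\le -B/T<0$. Your tolerance aside is correct: when $p_{c^\ast}\le\tau$ the assembly condition alone would not exclude the arm, so the result rests on the mass condition being part of the feasibility definition, which is exactly the reading the paper's one-line proof relies on.
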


\begin{proof}
$D_{10}$ requires $B p_c \le E_c$ in support $c$. If $E_c=0$ and $p_c>0$, then any $B>0$
violates it.
\end{proof}

This is categorically different from $M<0$, which a smaller budget repairs. Let
$A=\{c: e_c>0\}$ and $Z=\{c: e_c=0\}$. Define $D_{00}^{\mathrm{all}}$ as the source-preserving
arm over all supports and $D_{00}^{A}$ as the arm that drops $Z$ and renormalises the
survivors' original relative proportions, both at the same budget $B$. With
$E_A,S_A,I_A$ defined as in \eqref{eq:E}--\eqref{eq:I} but over $A$,
\begin{align}
\Delta_G &= X_B + E_A + S_A + I_A, \label{eq:X}\\
X_B &= Y(D_{00}^{A}) - Y(D_{00}^{\mathrm{all}}), \nonumber
\end{align}
and the four terms telescope to $Y(D_{11})-Y(D_{00}^{\mathrm{all}})$ exactly. No separate
$D_{11}^{A}$ is needed: the real gate arm already places zero mass on $Z$.

$X_B$ is the \emph{support-extinction contrast}, or extensive-margin support effect. It is not
``the value of the tag'': $D_{00}^{A}$ does not merely discard the extinct mass, it returns
that mass to the survivors in their original proportions, so $X_B$ carries a replacement policy
with it. A different replacement policy is a different $X_B$. If a cost is wanted, it is $-X_B$.

Reporting $E_A,S_A,I_A$ without $X_B$ silently changes the estimand from $\Delta_G$ to
$\Delta_{G\mid A}$, and deletes exactly the phenomenon that motivated looking at the domain. We
treat $X_B$ as mandatory whenever $Z\neq\emptyset$.

\section{Validation before use}
\label{sec:validation}

The decomposition is only as good as the constructor and the estimator, so we validate both
before reading any result, in layers that each fail for a different reason.

\textbf{Invariants.} Arms land within $\tau$ of $B$; support histograms match their targets; no
holdout example or holdout query appears in any arm.

\textbf{Null calibration.} With eligibility replaced by a random mask of the same rate, the
decomposition should return nothing. Over 200 replicates we take
$\delta=q_{95}(|\text{null}|)$ per component, with a distribution-free order-statistic interval
for that quantile. The null median is $0$ items in both locales and the sign tests are not
significant ($p=0.94/0.88/0.38$ for EN, $0.27/0.38/0.52$ for JP).

We state the decision rule explicitly, because it is a threshold and not a level-$\alpha$ test:
a component is called \emph{resolved} when $|\hat\theta| > \delta$, where $\hat\theta$ is the
mean over replicates. Two caveats follow and we do not hide either. First, each null replicate
is a single selection draw summarized as a median over training seeds, whereas $\hat\theta$ is a
mean over $30$ nested replicates, so the two are not the same statistic. We do not leave the
direction to argument, and we do not argue it from variance alone, since a larger variance does
not by itself dominate a quantile. Aggregating the stored null replicates to the same level as
$\hat\theta$ --- averaging ten single-draw replicates, matching the ten selection seeds --- and
recomputing the same $95$th percentile of the absolute value gives $4.6$, $2.0$ and $2.5$ items
for $E$, $S$ and $I$, against the $\delta$ we actually use of $14.0$, $6.0$ and $7.0$. The
threshold we apply is thus $2.8$--$3.0\times$ the one a matched aggregation would produce: the
rule is \emph{conservative}, not anticonservative, and $\delta$ is correspondingly not an
estimate of the standard error of $\hat\theta$. We keep the pre-registered single-draw
$\delta$ rather than switching to the matched one, because choosing the smaller threshold after
seeing which components it would resolve is exactly the selective judgement the protocol exists
to prevent. Second, $\delta$ is itself estimated and carries an order-statistic interval,
which we report alongside it; a component whose point estimate sits inside that interval should
be read as marginal. To make that reading concrete without inventing a second test, we report a
\emph{conservative interval-separation diagnostic}: whether the effect's own interval clears the
\emph{upper} end of $\delta$'s interval,
$\mathrm{CI}_{\mathrm{lower}} > \delta_{\mathrm{upper}}$. This charges the threshold's estimation
error against the effect rather than ignoring it. It is a diagnostic label, not a hypothesis
test, and it never overrides the pre-specified rule --- it only says how much room that verdict
had. We apply no multiplicity correction across the six primary component tests
and therefore do not attach a family-wise error rate to the pattern as a whole.

\textbf{Known-effect recovery (analytic).} On an outcome that is a known function of the arm
composition, the model-implied $E^\ast,S^\ast,I^\ast$ can be computed in closed form, so the
constructor, the decomposition algebra and the interval procedure can be checked against truth
without involving a learner. Over 300 replicates in each of ten settings per locale --- four planted
conditions ($E$ only, $S$ only, both, and one with a planted interaction) at an interior margin,
plus two of those conditions re-run at each of three frontier margins --- the bias was
indistinguishable from zero ($|\mathrm{bias}|\le 0.18$ items, within $0.6$ standard errors) and
the empirical coverage of the nominal 95\% intervals was $0.927$--$0.960$ over all 60
component-settings (Monte-Carlo error $\pm0.01$). An eleventh setting per locale is the
infeasible control below, which raises rather than reports. This is the check that the hierarchical bootstrap is not anticonservative; it is
performed where ground truth exists, because it cannot be performed on the trained runs.

\textbf{Known-effect recovery (trained).} Synthetic interventions that plant an effect in a
known component are recovered in the right component with the right sign, now with the learner
in the loop.

\textbf{Frontier stress.} Shrinking the mass margin from $M=0.07$ to $M=0.01$ at fixed budget
left bias, RMSE and coverage unchanged, so proximity to the mass boundary does not by itself
degrade the estimator. It does shrink what there is to estimate: at the tightest margin the
constructible support effect was under one holdout item.

\paragraph{Calibration signatures.}
A null band is only valid for the configuration it was measured on. We bind each $\delta$ to a
signature over locale, partition cardinality, budget, holdout size, training seeds, the gate's
identity and its realized pass rate, endpoint, and the constructor's source hash, and refuse to attach a verdict unless the signature matches
exactly. This check first fired on our own previously reported results: a $\delta$ calibrated at
one budget and seed count was being applied to runs at another. Those labels were withdrawn and
both locales were recalibrated at their operating points, which is the calibration reported
here. We also require every signature input to be observed or explicitly supplied; a default
that fills in a missing field makes the signature assert something nobody measured.

\begin{figure}[tb]
\centering
\includegraphics[width=\columnwidth]{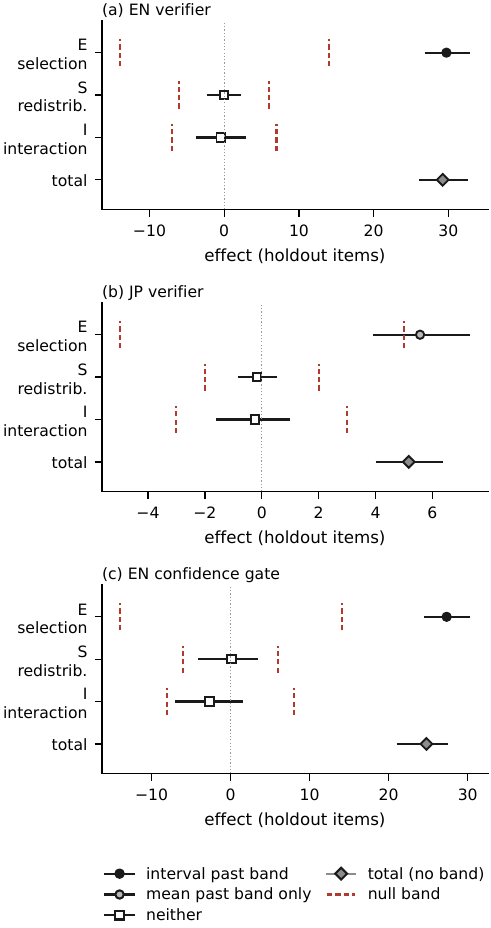}
\caption{Component means with 95\% hierarchical-bootstrap confidence intervals, in holdout
items, for (a) the Korean--English verifier gate, (b) the Korean--Japanese verifier gate, and
(c) EN\textsubscript{B}, a model-confidence gate on the Korean--English pool. Rows are the
within-support selection term $E$ of \eqref{eq:E}, the between-support redistribution term $S$
of \eqref{eq:S}, the interaction $I$ of \eqref{eq:I}, and the total $\Delta_G=E+S+I$. Vertical
dashed marks are $\pm\delta$, the null band calibrated from 200 placebo replicates \emph{at that
locale's own operating point}. A solid circle means the
effect interval clears the upper end of $\delta$'s own interval; a grey circle means only the
point estimate clears $\delta$, which is the pre-specified rule and the marginal case; an open
square means neither. The total has no calibrated band and is drawn
with a distinct marker. Panels are separate because the holdout
sizes differ (3{,}914 vs.\ 1{,}465 items), so the two item counts are not comparable quantities.}
\label{fig:decomposition}
\end{figure}

\section{Retrieval: the effect is within-support}

The gate maps Korean queries onto a product taxonomy leaf and accepts a pair when both a mapping
check and a translation check pass. Support is $k$-means on frozen embeddings of the pre-gate
query, $K=8$. The downstream model is a static-embedding retriever and the endpoint is top-1
leaf accuracy on a held-out split. Each locale runs 10 selection seeds $\times$ 3 training
seeds, and confidence intervals are hierarchical bootstraps over that nesting.

Table~\ref{tab:verdict} and Figure~\ref{fig:decomposition} give the result. Within-support
selection is the only component whose mean clears its calibrated null band in either locale;
redistribution and interaction remain unresolved at the available power. The two locales differ
in how much room that verdict had: EN $E$ also satisfies the conservative separation diagnostic
($\mathrm{CI}_{\mathrm{lower}}=+26.8$ against $\delta_{\mathrm{upper}}=17$), while JP $E$ does
not ($+3.9$ against $6$). Signs are $30/0/0$ for
$E$ and for $\Delta_G$, and the per-replicate identity residual is exactly zero. An item-level
three-stage bootstrap, which also resamples the evaluation set and is wider by construction,
still excludes zero for $E$: $[+16.1,+46.0]$ for EN and $[+2.0,+9.8]$ for JP.\footnote{The JP
interval comes from the same $10\times3$ run as the primary result. The EN item vectors were
stored only for a reduced $3\times3$ sub-run, so the EN item-level interval is computed on nine
replicates rather than thirty and is correspondingly less precise.}

Two qualifications belong with that sentence rather than after it. The Japanese margin is
narrow: $\hat\theta=+5.6$ against $\delta=5.0$, with $\delta$'s own order-statistic interval
$[4.0,6.0]$ containing the point estimate, and with the lower end of the effect's own interval
($+3.9$) below $\delta$. JP $E$ therefore passes the pre-specified point-estimate rule and is
marginal under the interval-based reading --- a third state, not a softer version of either. We
keep the pre-specified verdict rather than demoting it after seeing the result, and we decline to
present it as the same kind of detection as EN. The Japanese run is a replication of the
component \emph{pattern}, not an independently strong detection. And the point estimates place nearly all of the observed total on $E$ in both
locales, but that is a statement about point estimates: we do not infer $S=I=0$.

\begin{table*}[t]
\centering
\small
\setlength{\tabcolsep}{6pt}
\begin{tabular}{llrrcr@{\,}ll}
\toprule
& & \multicolumn{1}{c}{mean} & \multicolumn{1}{c}{95\% CI} & $n_+/n_-/n_0$ & \multicolumn{2}{c}{$\delta$ [CI]} & evidence \\
\midrule
\multirow{4}{*}{EN} & $E$ & $+29.7$ & $[+26.8,\,+32.9]$ & 30/0/0 & 14.0 & [13,\,17] & point $+$ interval \\
 & $S$ & $-0.0$ & $[-2.3,\,+2.2]$ & 13/15/2 & 6.0 & [5,\,8] & -- \\
 & $I$ & $-0.5$ & $[-3.8,\,+2.9]$ & 16/13/1 & 7.0 & [6,\,11] & -- \\
 & $\Delta_G$ & $+29.2$ & $[+26.1,\,+32.6]$ & 30/0/0 & \multicolumn{2}{c}{--} & -- \\
\midrule
\multirow{4}{*}{JP} & $E$ & $+5.6$ & $[+3.9,\,+7.3]$ & 30/0/0 & 5.0 & [4,\,6] & point only \\
 & $S$ & $-0.2$ & $[-0.8,\,+0.5]$ & 9/13/8 & 2.0 & [2,\,3] & -- \\
 & $I$ & $-0.2$ & $[-1.6,\,+1.0]$ & 13/13/4 & 3.0 & [2,\,3] & -- \\
 & $\Delta_G$ & $+5.2$ & $[+4.0,\,+6.4]$ & 30/0/0 & \multicolumn{2}{c}{--} & -- \\
\midrule
\multirow{4}{*}{EN\textsubscript{B}} & $E$ & $+27.3$ & $[+24.4,\,+30.3]$ & 9/0/0 & 14.0 & [13,\,16] & point $+$ interval \\
 & $S$ & $+0.1$ & $[-4.1,\,+3.4]$ & 5/3/1 & 6.0 & [5,\,8] & -- \\
 & $I$ & $-2.7$ & $[-7.0,\,+1.6]$ & 4/5/0 & 8.1 & [7,\,10] & -- \\
 & $\Delta_G$ & $+24.8$ & $[+21.0,\,+27.6]$ & 9/0/0 & \multicolumn{2}{c}{--} & -- \\
\bottomrule
\end{tabular}

\caption{Decomposition at each locale's primary operating point, in holdout items, over 30
replicates. $\delta$ is that locale's own null band --- the 95th percentile of $|$null$|$ over
200 placebo replicates at a matching calibration signature --- reported with its order-statistic
interval. The evidence column names three states rather than starring two, because a star in an
ML table reads as ``statistically significant'' and $\delta$ is not a level-$\alpha$ threshold.
\emph{point only} means the pre-specified rule $|\hat\theta|>\delta$ passes; \emph{point $+$
interval} additionally means the effect's interval clears the \emph{upper} end of $\delta$'s own
interval. EN $E$ reaches the second; JP $E$ reaches the first and is marginal, its point estimate
falling inside $\delta$'s interval. A dash means the point estimate did not clear $\delta$ at
this power, which is not a statement that the component is zero. $\Delta_G$ is the total and has no calibrated band.}
\label{tab:verdict}
\end{table*}

\paragraph{A second gate family.}
Everything above audits one gate: a deterministic verifier whose eligibility is a map-and-\allowbreak
translation check. Whether CGSD says anything beyond that gate's mechanism is a question about the
method, so we ran it again on a gate that shares none of it --- a threshold on the generator's own
confidence score, eligible iff $\mathrm{conf}\ge0.75$. It is a different intervention in every
respect that matters here: it is model-based rather than rule-based, it is continuous before it is
thresholded, and it passes $60.3\%$ of the pool against the verifier's $47.9\%$. That difference in
exposure is why it cannot be judged against the verifier's band; it carries its own $200$-replicate
null, and the gate's identity and realized pass rate are inputs to the calibration signature so the
two can never be crossed by accident. The result is the third panel of Table~\ref{tab:verdict}:
$E=+27.3$ items with the interval clearing $\delta$'s upper end, $S$ and $I$ unresolved, and the
decomposition identity exact to the last digit. The same component carries the effect, on a gate
built on a different principle. We do not read across the two as an effect-size comparison ---
the pass rates differ and that difference is not controlled --- but the qualitative verdict is not
a property of how the first gate decides.

\paragraph{Reading.}
What the data support is that restricting to eligible examples helps within a fixed support
distribution. At the point estimates $E$ accounts for nearly all of the observed total, but the
data do not establish that redistribution is absent or unnecessary: $S$ and $I$ are individually
unresolved at this power, and no joint interval for $S+I$ was pre-specified. The qualitative component pattern replicates in Japanese:
$E$ alone crosses its pre-specified point-estimate band while $S$ and $I$ do not, and the JP $E$
result remains marginal under the interval-based reading. We make no cross-locale claim about effect
magnitude, because the operating points and holdout sizes differ. The Japanese run is a
replication of the same gate on the same data family, not an independent domain.

For readers who want the redistribution channel as a single number, the composite
$R = S+I = \Delta_G-E$ is $-0.5$ items for EN and $-0.4$ for JP by point estimate. We report it
as descriptive: it was not pre-specified, and it inherits the power limits of its parts.

\paragraph{Partition sensitivity.}
This is sensitivity, not robustness in the usual sense: the partition is part of the estimand,
so changing it changes what is being estimated rather than perturbing a fixed target. Across
$K=4,8,16$ at a budget interior for all three partitions, the qualitative conclusion that the
resolved component is $E$ persists, although its magnitude changes. The selection term stays
positive
($+16.0$, $+16.1$, $+9.8$ items) with no shift into redistribution. A paired contrast on the
same replicates gives $K{=}16$ minus $K{=}8$ of $-6.3$ items $[-12.4,-0.7]$ for $E$, an interval
excluding zero, with the difference appearing in $\Delta_G$ rather than in $S$. A finer
partition thus yields a materially smaller selection estimate, which is expected under a
partition-indexed estimand and is a reason to report $K$ with every number rather than to treat
any one value as canonical. These sensitivity runs use three selection seeds rather than ten and
are correspondingly noisier; with only three groups the between-seed variance component is not
usefully estimable and we do not report it. Replacing the support-preserving baseline $D_{00}$
with the unconstrained reference $R$ moves the estimate by $-0.5$ items $[-5.1,+3.5]$, so the
choice of baseline does not drive the conclusion.

\section{Feasibility audits: extinction, narrow bands, and empty regions}
\label{sec:math}

We applied the same audit to a gate over generated mathematics problems, chosen precisely
because a preliminary analysis suggested the gate had removed whole problem types. The support
label is the generator's own pre-gate template tag, so it satisfies the pre-gate requirement.

Two of four supports have $e_c=0$ (Table~\ref{tab:math}). By Proposition~\ref{prop:ext} the
ordinary four-arm construction is undefined at every positive budget, which we confirm
empirically down to $B=0.1\%$ of $T$. The extinct mass is $6.1\%$ of tokens and $9.0\%$ of rows;
we report the token figure because exposure is matched in tokens, and the two differ by a factor
of $1.5$ here.

Extending to \eqref{eq:X} makes the arms definable. At the pre-registered operating point it does
not yet make them constructible, and the reason turns out to be instructive. At the
survivor-conditional interior budget, the survivor arms assemble with $9$ tokens of slack, but
$D_{00}^{\mathrm{all}}$ has $G=-3$: no assembly of these examples lands within tolerance, and no
packer can change that, since the subset-sum here is solved exactly. The binding mismatch is
concentrated in the extinct supports, which supply $11.9$ of the $13.0$ tokens of $Q$, or 92\%.
In this pool the extinct supports are also the smallest supports, so their token lattices are
too coarse to satisfy the extensive-margin match. We do not claim this coincidence in general: a
large support can also be wiped out entirely by a gate, and would not produce the same assembly
failure.

The same audit applied to five gated text-to-SQL pools from the same generator finds no
extinction but a harsher assembly picture, and --- after the budget scan above --- one
exception. The largest pool ($578$ rows) has $\min_c e_c = 0.070$, which is below the interior
margin we fixed in advance; we report that as having no \emph{power-safe} operating point rather
than an empty $\mathcal{F}_\Pi$, since the margin is a power criterion and \eqref{eq:F} only
asks $M\ge0$. Scanning admissible budgets under $M\ge0$ on a uniform grid ($2{,}795$ budgets), four of the five
pools have an empty feasible region: not one budget gives $G\ge0$ for every arm. The fifth,
\textsc{live-clean} ($99$ rows), is constructible. Of the $86$ grid budgets that satisfy the
assembly condition we attempted all $86$, and the arms build at $68$, spanning
$B/T\in[0.309,0.643]$. The $18$ that fail all fail the same way, and the reason is worth stating
because it bounds what $G$ promises: $G\ge0$ certifies that an in-tolerance assembly exists, our
assembler solves the subset-sum exactly \emph{within each support}, and the tolerance is checked
on the \emph{total}. Per-support exactness does not compose into a global one, so a budget can
clear the condition and still defeat the solver. That gap --- $79\%$ of certified budgets
realized --- is a property of the algorithm, reported separately from the pool's own limit. SQL rows average $\sim\!155$ tokens
against $\sim\!39$ for the mathematics pool, so the same tolerance buys four times less assembly
freedom, which is why the feasible band is narrow where it exists at all. Cross-domain
decomposition therefore waits on data, not on further method development, and we can say by how
much. A single retrieval arm is trained on $95{,}162$ tokens; the entire mathematics pool is
$3{,}915$ tokens and the entire \textsc{live-clean} pool is $15{,}042$, so one arm of the
retrieval design consumes $24\times$ and $6\times$ their respective totals. At the feasible
budget a mathematics arm sees roughly $49$ rows. We could build these arms --- we have --- and
report a decomposition from them; we do not, because a component estimate at that scale would
carry an interval wide enough to contain any conclusion, and publishing it would be the
failure this paper is about. The endpoint compounds the gap: answer correctness over a
generative model is not the retrieval endpoint the estimator here is calibrated for, so a second
calibration would be required before any verdict could be attached. What these pools establish
is that the construction exists and that its feasibility conditions do the work they were
introduced for; what they cannot establish is an effect.

\begin{table}[tb]
\centering
\footnotesize
\setlength{\tabcolsep}{4pt}
\begin{tabular}{lrrrl}
\toprule
support & rows & tokens & $e_c$ & status \\
\midrule
\texttt{poly\_derivative} & 7 & 188 & 0.000 & \textbf{extinct} \\
\texttt{poly\_expand} & 44 & 1573 & 1.000 & surviving \\
\texttt{rational\_calc} & 2 & 50 & 0.000 & \textbf{extinct} \\
\texttt{series\_sum} & 47 & 2104 & 0.621 & surviving \\
\bottomrule
\end{tabular}

\caption{Support structure of the gated mathematics pool. Extinct supports have positive source
mass and zero gate-eligible mass, so Proposition~\ref{prop:ext} applies.
\tabmathcaption{}
At the pre-registered operating point the survivor-conditional arms assemble and the arm that
must reproduce the full source support does not. That failure is specific to this budget: see the
budget scan in the text.}
\label{tab:math}
\end{table}

It would be easy to stop here and report that the extended decomposition is not realizable on
this pool. That conclusion does not survive our own analysis. $G$ is a lattice quantity and is
not monotone in $B$, so a verdict read off one budget is a verdict about that budget. Scanning
every admissible integer budget exhaustively, $54$ of them satisfy the same pre-registered margin
\emph{and} give $G\ge0$ for all five arms, in a band just below the operating point
($B/T\in[0.480,0.495]$); the point we had fixed, $B=1989$, is the largest the margin allows and
falls in a gap between such bands. Moving $50$ tokens down to $B=1939$ flips
$D_{00}^{\mathrm{all}}$ from $G=-3$ to $G=+0.7$, and the five arms then build within tolerance
(worst deviation $0.46\%$ against $\tau=0.5\%$). Attempting all $54$ certified budgets together
with the pre-registered one, $49$ of $55$ build, and the six failures separate along exactly the
lines the taxonomy predicts: $B=1989$ reports \textsc{granularity} --- no assembly exists --- and
the other five report \textsc{packer} --- one exists and the solver does not find it. Recovering
the $49$ required replacing the greedy packer with an exact subset-sum assembler, randomized over
the witnesses that achieve the attainable sum so that selection within a support is not quietly
made a function of example length. The residual five are the same per-support-versus-global gap
described below: the assembler is exact inside each support, while $\tau$ is checked on the total.

What we therefore report is narrower and, we think, more useful than the claim we set out to
make: extinction makes the ordinary four-arm estimand undefined at every budget, and that part is
structural; the extended design is constructible on this pool, but only inside narrow budget
bands that the natural choice --- the largest budget the power margin allows --- misses. This is
the assembly condition doing exactly what it was introduced for, on us. The survivor-conditional terms
could be computed and would look like a result; substituting them for $\Delta_G$ would silently
replace the estimand with $\Delta_{G\mid A}$ and delete the extinction that motivated the domain,
which is the error this construction exists to prevent. We do not train on this pool.

\section{Limitations}

The retrieval evidence covers two gate families and two locales, but one backbone, one endpoint,
and one data family; it is a replication across gates and languages, not a demonstration of
domain generality. The second gate is also lower powered by design ($3$ selection seeds against
$10$), which is why its selection-variance component is estimated at zero: training noise
dominates at that number of groups, and the one-way estimator truncates. The cross-domain claim
that motivated this work --- that the same gate helps retrieval and hurts mathematics --- is
\emph{not} established here, because the mathematics side is a feasibility result rather than a
negative one. Unresolved components are statements about power, not about zero; the
Japanese null band is 2 items against a 1-item measurement grain. $E$ remains uninterpreted as
to mechanism. Finally, $\Fset_\Pi$ is defined relative to the matching contract; a different
contract is a different feasible region, and we regard changing it after seeing a failure as
out of bounds.

\section{Conclusion}

A gate-versus-random comparison does not, by itself, isolate within-support selection from
support redistribution. CGSD separates them by construction, and the price of the construction
is that the arms have to exist. Making existence explicit --- a mass condition, an assembly
condition, and a proposition for the case that no budget repairs --- turned out to be at least
as informative as the effect estimates. On retrieval, the observed benefit is
dominated in point estimates by within-support selection; redistribution is not resolved at the
available power, which is not the same as being zero. On mathematics the honest output of the audit is not a number but a
structure: extinction leaves the ordinary comparison undefined at every budget, while the extended
comparison is constructible only inside narrow budget bands that the operating point we had
pre-registered happens to miss. Four of five text-to-SQL pools have empty feasible regions and the
fifth has a narrow band. None of that is a number a practitioner can act on, and all of it is
something they need before trusting one.

\section*{Availability}

The implementation is released at \url{https://github.com/ThakiCloud/cgsd} under Apache-2.0,
together with the recorded result files and the scripts that generate every figure, table and
number in this paper from them. Nothing here is transcribed by hand, so a reader can rerun that
pipeline and reproduce the reported values exactly. The constructor, both feasibility tests and
the extinction split ship with the data-free gate harnesses used to pin them, which execute
without any corpus. What is not released is the data: the source pools and evaluation splits are
not ours to redistribute, so retraining the arms requires equivalent data of the reader's own.
The estimand, the decomposition identity, both feasibility conditions, the extinction
proposition and the drift bound are fully specified in
Sections~\ref{sec:estimand}--\ref{sec:extinction} and can be reimplemented from the text alone.

\bibliographystyle{plainnat}
\bibliography{refs}

\end{document}